\newtheorem{proposition}{Proposition}
\newtheorem{corollary}{Corollary}
\theoremstyle{definition}
\newtheorem{definition}{Definition}
\newtheorem{remark}{Remark}
\DeclareMathOperator*{\argmin}{arg\,min}
\title{\LARGE \bf
Risk-Aware Safety Filters with Poisson Safety Functions \\ and Laplace Guidance Fields
}
\author{Gilbert Bahati, Ryan M. Bena, Meg Wilkinson, Pol Mestres, Ryan K. Cosner, and Aaron D. Ames 
%
\thanks{This research is supported by BP.}
\thanks{G. Bahati, R. M. Bena, P. Mestres and A. Ames are with the Department of Mechanical and Civil Engineering, Caltech, Pasadena, CA; M. Wilkinson is with the Department of Computing and Mathematical Sciences, Caltech, Pasadena, CA. Emails: $\{$\texttt{gbahati, ryanbena, mwilkins mestres, ames}$\}$\texttt{@caltech.edu}.} 
\thanks{R. K. Cosner is with the Department of Mechanical Engineering, Tufts University, Medford, MA,
\texttt{ryan.cosner@tufts.edu}.}
}
\begin{document}

\maketitle
\thispagestyle{empty}
\pagestyle{empty}

\newcommand{\naturals}{\mathbb{N}}
\newcommand{\re}{\mathbb{R}}
\newcommand{\R}{\mathbb{R}}
\newcommand{\realnonneg}{\mathbb{R}_{\ge 0}}
\newcommand{\realpos}{\mathbb{R}_{> 0}}
\newcommand{\until}[1]{[#1]}
\newcommand{\map}[3]{#1:#2 \rightarrow #3}
\newcommand{\qedA}{~\hfill \ensuremath{\square}}
\newcommand\scalemath[2]{\scalebox{#1}{\mbox{\ensuremath{\displaystyle #2}}}}
\newcommand{\interior}{\operatorname{int}}

\newcommand{\longthmtitle}[1]{\mbox{}{\textit{(#1):}}}
\newcommand{\setdef}[2]{\{#1 \; | \; #2\}}
\newcommand{\setdefb}[2]{\big\{#1 \; | \; #2\big\}}
\newcommand{\setdefB}[2]{\Big\{#1 \; | \; #2\Big\}}
\newcommand*{\SetSuchThat}[1][]{} 
\newcommand*{\MvertSets}{%
    \renewcommand*\SetSuchThat[1][]{%
        \mathclose{}%
        \nonscript\;##1\vert\penalty\relpenalty\nonscript\;%
        \mathopen{}%
    }%
}
\MvertSets 

\newcommand{\dt}{\mathrm{d}t}
\newcommand{\dy}{\mathrm{d}y}
\newcommand{\dx}{\mathrm{d}x}
\newcommand{\dtau}{\mathrm{d}\tau}
\newcommand{\Cc}{\mathcal{C}}
\newcommand{\Ac}{\mathcal{A}}
\newcommand{\pCc}{\partial \mathcal{C}}
\newcommand{\Bc}{\mathcal{B}}
\newcommand{\Tc}{\mathcal{T}}
\newcommand{\Dc}{\mathcal{D}}
\newcommand{\Oc}{\Omega}
\newcommand{\Occ}{\overline{\Omega}}
\newcommand{\pOc}{\partial \Omega}
\newcommand{\Ocext}{\Oc_\mathrm{ext}}
\newcommand{\Ocint}{\Oc_\mathrm{int}}
\newcommand{\Hc}{\mathcal{H}}
\newcommand{\Fc}{\mathcal{F}}
\newcommand{\Mc}{\mathcal{M}}
\newcommand{\Nc}{\mathcal{N}}
\newcommand{\Pc}{\mathcal{P}}
\newcommand{\Uc}{\mathcal{U}}
\newcommand{\Sc}{\mathcal{S}}
\newcommand{\Xc}{\mathcal{X}}
\newcommand{\Yc}{\mathcal{Y}}
\newcommand{\Vc}{\mathcal{V}}
\newcommand{\Zc}{\mathcal{Z}}
\newcommand{\Lc}{\mathcal{L}}
\newcommand{\Rm}{\mathcal{\mathbb{R}}}

\newcommand{\divv}{\nabla \cdot \vec{\bv}}
\newcommand{\hs}{h_\mathrm{\Sc}}

\newcommand{\defeq}{\triangleq}

\newcommand{\vr}{\varepsilon}
\newcommand{\nom}{{\operatorname{nom}}}
\newcommand{\m}{{\operatorname{min}}}
\newcommand{\des}{{\operatorname{des}}}
\newcommand{\on}{{\operatorname{on}}}
\newcommand{\off}{{\operatorname{off}}}
\newcommand{\fl}{{\operatorname{FL}}}
\newcommand{\Lie}{\mathcal{L}}
\newcommand{\qp}{{\operatorname{QP}}}

\newcommand{\ie}{i.e., }
\newcommand{\todo}[1]{{\color{cyan} Todo: #1}}

\newcommand{\ba}{\mathbf{a}}
\newcommand{\bb}{\mathbf{b}}
\newcommand{\be}{\mathbf{e}}
\renewcommand{\bf}{\mathbf{f}} 
\newcommand{\bff}{\mathbf{f}}
\newcommand{\bg}{\mathbf{g}}
\newcommand{\bk}{\mathbf{k}}
\newcommand{\bp}{\mathbf{p}}
\newcommand{\bq}{\mathbf{q}}
\newcommand{\bu}{\mathbf{u}}
\newcommand{\bv}{\mathbf{v}}
\newcommand{\bvv}{\vec{\mathbf{v}}}
\newcommand{\bn}{\mathbf{n}}
\newcommand{\hbn}{\hat{\mathbf{n}}}

\newcommand{\bx}{\mathbf{x}}
\newcommand{\bz}{\mathbf{z}}
\newcommand{\br}{\mathbf{r}}
\newcommand{\bA}{\mathbf{A}}
\newcommand{\bB}{\mathbf{B}}
\newcommand{\bD}{\mathbf{D}}
\newcommand{\bC}{\mathbf{C}}
\newcommand{\bF}{\mathbf{F}}
\newcommand{\bJ}{\mathbf{J}}
\newcommand{\bG}{\mathbf{G}}
\newcommand{\bK}{\mathbf{K}}
\newcommand{\bP}{\mathbf{P}}
\newcommand{\bW}{\mathbf{W}}
\newcommand{\bw}{\mathbf{w}}
\newcommand{\bd}{\mathbf{d}}
\newcommand{\bvy}{\vec{\by}}
\newcommand{\bty}{\tilde{\by}}
\newcommand{\bbeta}{\boldsymbol{\eta}}
\newcommand{\mb}[1]{\mathbf{#1}}

\newcommand{\bY}{\mathbf{Y}}
\newcommand{\by}{\mathbf{y}}
\newcommand{\byobs}{\mathbf{y}_\mathrm{obs}}
\newcommand{\bl}{\mathbf{\lambda}}

\newcommand{\bxd}{\bx_\mathrm{d}}
\newcommand{\bxobs}{\bx_\mathrm{obs}}
\newcommand{\md}{\mathrm{d}}

\newcommand{\Uxd}{U_{\mathrm{d}}}
\newcommand{\Uobs}{U_{\mathrm{obs}}}
\newcommand{\Uapf}{U_{\mathrm{APF}}}

\newcommand{\GradUxd}{\nabla U_{\mathrm{d}}}
\newcommand{\GradUobs}{\nabla U_{\mathrm{obs}}}
\newcommand{\GradUapf}{\nabla U_{\mathrm{APF}}}

\newcommand{\cmax}{c_\mathrm{max}}
\newcommand{\cmin}{c_\mathrm{min}}

\newcommand{\hn}{h_\mathrm{n}}
\newcommand{\Dhn}{D h_\mathrm{n}}
\newcommand{\Dh}{D h}
\newcommand{\Dhd}{D h_\mathrm{d}}

\begin{abstract}
    %

%
Robotic systems navigating in real-world settings require a semantic understanding of their environment to properly determine safe actions.  
This work aims to develop the mathematical underpinnings of such a representation--specifically, the goal is to develop safety filters that are risk-aware.  
To this end, we take a two step approach: encoding an understanding of the environment via Poisson's equation, and associated risk via Laplace guidance fields.  
That is, we first solve a Dirichlet problem for Poisson's equation to generate a safety function that encodes system safety as its 0-superlevel set. 
We then separately solve a Dirichlet problem for Laplace's equation to synthesize a safe \textit{guidance field} that encodes variable levels of caution around obstacles---by enforcing a tunable flux boundary condition. 
The safety function and guidance fields are then combined to define a safety constraint and used to synthesize a risk-aware safety filter which, given a semantic understanding of an environment with associated risk levels of environmental features, guarantees safety while prioritizing avoidance of higher risk obstacles. 
We demonstrate this method in simulation and discuss how \textit{a priori} understandings of obstacle risk can be directly incorporated into the safety filter to generate safe behaviors that are risk-aware. 
\end{abstract}

\section{Introduction}

As modern robots increasingly venture into the real world, they encounter obstacles with variable degrees of relevance to system safety. Different obstacle are often associated with different levels of risk, motivating different degrees of conservatism during navigation.
For example, while avoiding collisions in the environment, it is often important for the robot to behave more cautiously around high-risk obstacles like humans or expensive equipment. A \emph{risk-aware} safety approach ensures collision avoidance while incorporating additional conservatism near high-risk obstacles.  

One common approach for encoding system safety is through control barrier functions (CBFs)\cite{AmesTAC17, AA-AJT-CRH-GO-AA:22}. CBFs divide the system's operable region into safe states and unsafe states and can be used to synthesize safe controllers by enforcing the forward invariance of the safe region. However, this binary representation of safety, especially in the context of collision avoidance, often treats all possible collisions equally by imposing uniform gradients on the boundary of the safe region \cite{long2021learning,AmesECC19}. This uniformity limits the ability to tailor the system behavior to specific obstacles or spatially-dependent risk factors. 
\begin{figure}[t!]
    \centering    \includegraphics[width=1\linewidth]{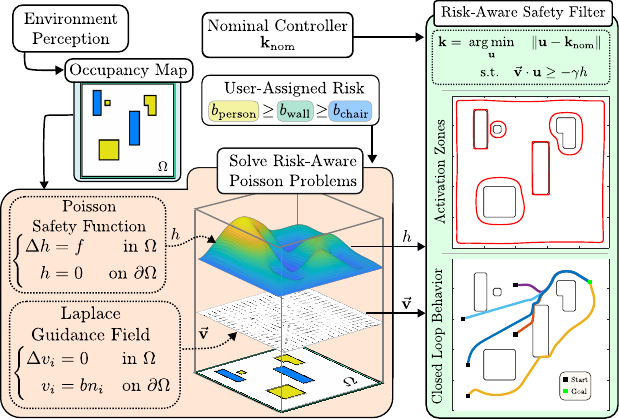}
\caption{\small{The proposed risk-aware control synthesis method. By incorporating user-assigned risk values in the guidance field, our approach generates controllers and closed-loop behaviors exhibiting conservatism aligned with the prescribed risk levels.}}
    \label{fig:hero_figure}
    \vspace{-7mm}
\end{figure}

While CBFs have been shown to be powerful theoretical tools for defining and enforcing safety, they are often difficult to generate, even for systems with simple dynamics. 
For complex environments, CBFs are often constructed \textit{ad hoc} with limited generalizability \cite{TM-RC-AS-WU-AA:22,cosner2022safety, glotfelter2018boolean}. While some reachability and learning approaches have shown promise as generalizable methods for automatic CBF synthesis \cite{choi2021robust,long2021learning,qin2021learning,robey2020learning}, they are computationally complex, and typically require that the CBF be computed offline, preventing actively incorporation of dynamic environment data.
Alternatively, Poisson safety functions (PSFs) \cite{bahati2025dynamic} enable real-time generation of safe sets from perception data by solving a Dirichlet problem for Poisson's equation online.
This approach leverages a \textit{guidance field} that encodes gradient information required for safety \cite{bahatisafe}. 
The guidance field provides additional flexibility in defining safety by allowing boundary conditions to specify desired gradients (i.e., boundary flux) on obstacle surfaces.

Inspired by the ability to assign the desired  boundary flux for PSFs, this work presents a method for generating safe behavior with variable, spatially-dependent, user-assigned conservatism by directly assigning boundary flux values through \textit{Laplace} guidance fields and decoupling the safety gradient from the safety value in the standard CBF safety filter \cite{ADA-XX-JWG-PT:17}. Specifically, the safety gradient, \ie the vector field yielding system safety, is now directly defined by the guidance field and is not necessarily the exact gradient of the PSF. While other methods such as tunable input-to-state safe CBFs \cite{AnilLCSS22} and state-dependent CBF decay conditions \cite{cosner2023learning} also enable spatially variable degrees of conservatism, neither are capable of rapidly generating CBFs with variable conservatism online.
Alternatively, our method is constructive, capable of online synthesis, and results in behaviors which display different levels of caution depending around different obstacles depending on the assigned risk---remaining agnostic to what the risk represents and how it is quantified \cite{akella2024risk}, thus is broadly applicable to different scenarios and objectives.

The main contributions of this work are threefold. First, we develop a novel method for automatically synthesizing PSF-based safety filters with Laplace guidance fields to incorporate \textit{a priori} risk-awareness, achieving spatially variable conservatism. Next, we provide analysis on how adjustments of the guidance field affect the \textit{activation zones} of the safety filter, directly displaying how modifications to the flux impact the conservatism of the filter. Finally, we provide several examples demonstrating how our method can be used to achieve risk-aware safety in a variety of contexts such as environments with probabilistic occupancies, dynamic obstacles or semantic labels with risk magnitudes.

\section{background}
\noindent
Consider the nonlinear control affine system of the form:
\begin{align}\label{eq: nl sys}
    \dot{\bx} = \bf(\bx) + \bg(\bx)\bu,
\end{align}
where $\bx \in \re^n, \bu \in \re^m$ are the state and input, and $\bf:\re^n \rightarrow \re^n, \bg:\re^{n}\rightarrow \re^{n \times m} $ are assumed to be locally Lipschitz continuous functions. Given a locally Lipschitz continuous controller $\bk:\re^n \rightarrow \re^m$ , the closed-loop system $\dot{\bx} = \bf_{\mathrm{cl}}(\bx) = \bf(\bx) + \bg(\bx)\bk(\bx)$ has a unique solution for any initial condition $\bx_0\in\mathbb{R}^n$. For all closed-loop systems considered in this work, we assume that such solutions exist for all $t \geq 0$ for ease of exposition.

\subsection{Safety and Control Barrier Functions}
We formalize the definition of safety as the forward invariance of a \textit{safe set}, where the system is considered safe when all trajectories of the closed-loop system remain in the desired safe set for all $t \geq 0$. In particular, we consider safe sets defined as the $0$-superlevel set of a continuously differentiable function $\hs:\re^n\rightarrow \re$  as:
\begin{align}
    \label{eq: safe set}
    \Sc = \left\{\bx \in \re^n \, \big| \, \hs(\bx) \geq 0\right\}.
\end{align}%
Control Barrier Functions (CBFs) are a constructive tool that can be used to design controllers for \eqref{eq: nl sys} that enforce the forward invariance of the set $\Sc$.

\begin{definition}(Control Barrier Functions \cite{AmesTAC17})
We call a function $\hs:\re^n\rightarrow \re$ a Control Barrier Function (CBF) for \eqref{eq: nl sys} if there exists\footnote{ A continuous function $\gamma: \mathbb{R} \rightarrow \mathbb{R}$ is an \textit{extended class} $\mathcal{K}$, denoted by $\gamma \in \mathcal{K}^{e}_{\infty}$, if $\gamma$ is monotonically increasing, $\gamma(0) = 0$, $\lim_{s \rightarrow \infty} \gamma(s) = \infty$, and $\lim_{s \rightarrow -\infty} \gamma(s) = -\infty$.} $\gamma \in \mathcal{K}^e_\infty$ such that for all $\bx \in \re^n$, the following condition holds:
    \begin{align}\label{CBF Condition}\!\!\!
        \sup_{\bu \in \re^m} \! \! \big\{ \underbrace{D \hs (\bx) \!\cdot\! \bf(\bx)}_{L_\bf \hs(\bx)} + \underbrace{D \hs (\bx) \!\cdot\! \bg(\bx)}_{L_\bg \hs(\bx)}\bu \big \} \!>\! -\gamma(\hs(\bx)),
    \end{align}
    where $D h_{\mathcal{S}}$ denotes the gradient of $h_{\mathcal{S}}$.
\end{definition} 
%
Given a nominal controller $\bk_\nom:\re^n \rightarrow \re^m$, and a CBF $h_\Sc$, a typical way of synthesizing safe controllers is through quadratic programming-based safety filters, which adjust $\bk_\nom$ to the nearest safe action:
\begin{align*}\label{eq: safety filter}
    \bk(\bx) = &\argmin_{\bu \in \re^m} &&\|\bu - \bk_{\mathrm{nom}}(\bx)\|_2^2 \tag{Safety-Filter} \\
    & \quad 
 \mathrm{s.t.} &&L_\bf \hs(\bx) + L_\bg \hs(\bx)\bu  \geq - \gamma(\hs(\bx)).
\end{align*}
%
%
\subsection{Outputs and Relative Degree}
This manuscript considers safety specifications which can be represented via a set of desired \textit{outputs}. We recall the notion of \textit{relative degree}, which describes the level of differentiation at which a control input affects an output.
\begin{definition}[Relative Degree $r$ \cite{isidori1985nonlinear}]\label{def:relative-degree}
    A function $\by\,:\,\R^n\rightarrow\R^p$ has \emph{relative degree} $r\in\mathbb{N}$ for \eqref{eq: nl sys} if:
    \begin{align}
        L_{\bg}L_{\bf}^{i}\by(\bx) \equiv \mathbf{0}, &\quad \forall i\in\{0,\dots,r-2\},\label{eq:zero-lie-derivatives} \\
        \mathrm{rank}(L_{\bg}L_{\bf}^{r-1}\by(\bx)) = p, & \quad \forall \bx\in\R^n.\label{eq:decoupling-full-rank}
    \end{align}
\end{definition}
\noindent
Given an output $\by$ with relative degree $r$, we define a new set of partial coordinates:
\begin{equation}\label{eq:output-coordinates}
      \vec{\mb{y}}(\bx) \coloneqq \begin{bmatrix}
        \by(\bx) \\
        \by^{(1)}(\bx) \\
        \vdots \\
        \by^{(r-1)}(\bx)
    \end{bmatrix}
    =
    \begin{bmatrix}
        \by(\bx) \\
        L_{\bf}\by(\bx) \\
        \vdots \\
        L_{\bf}^{r-1}\by(\bx)
    \end{bmatrix}
    \in\re^{pr},
\end{equation}
where $\by^{(r)} = \dv[r]{\by}{t}$, leading to the following linear dynamics:
\begin{align}\label{eq: linear output dynamcis1}
    \frac{\mathrm{d}}{\dt}\vec{\mb{y}}(\bx) & = 
    \underbrace{
    \begin{bmatrix}
        \mb{0} & \mb{I}_{p(r-1)}\\
        \mb{0} & \mb{0}
    \end{bmatrix}}_{\mb{A}} \vec{\mb{y}}(\bx)
    \! + \!
    \underbrace{
    \begin{bmatrix}
        \mb{0} \\
        \mb{I}_p
    \end{bmatrix}}_{\mb{B}}
    \bw\\
    \bw &\coloneqq L_{\bf}^{r}\by(\bx) + L_{\bg}L_{\bf}^{r-1}\by(\bx)\bu,     \label{eq: y_r mapping}
\end{align}
where \eqref{eq: y_r mapping} is an input to \eqref{eq: linear output dynamcis1}. When $\by$ has relative degree $r$, the controller $\bw= \hat{\bk}(\vec{\by})$ designed for \eqref{eq: linear output dynamcis1} can be transferred back to the controller for \eqref{eq: nl sys} as follows\footnote{Condition \eqref{eq:decoupling-full-rank} implies the right psuedo-inverse $L_{\bg}L_{\bf}^{r-1}\by(\bx)^\dagger$ exists.}:
\begin{equation}\label{eq:input-transformation}
    \bu = L_{\bg}L_{\bf}^{r-1}\by(\bx)^\dagger\left[\hat{\bk}(\vec{\by}(\bx)) - L_{\bf}^{r}\by(\bx) \right].
\end{equation}
This partial coordinate transformation is a full coordinate transformation if $pr=n$.
The assumption that $\by$ has relative degree $r$, and the form of the output dynamics \eqref{eq: linear output dynamcis1},
enables the employment of various CBF construction methods for the original dynamical system~\eqref{eq: nl sys}~\cite{cohen2024constructive,bahati2025control,AndrewCDC22,MurrayACC20}. Next, we discuss a method of synthesizing CBFs for environmentally relevant safety specifications, as presented in \cite{bahati2025dynamic}.
%
\subsection{Poisson Safety Functions}
We focus on systems for which safety specifications are described in spatial coordinates $\by =(x,y,z)\in \re^3$. 
Given environmental occupancy data, let $\Oc$ be a smooth, open, bounded and connected set representing unoccupied regions and $\pOc$ represent the surfaces of occupied regions.
Specifically, $\pOc = \bigcup_{i}^{n_\mathrm{o}} \partial \Gamma_i$
 where $\Gamma_i$ is an open, bounded and connected set corresponding to the interior of an occupied region with $n_\mathrm{o}$ denoting the total number of occupied regions.
A safety function provides a functional representation of safety for an environment, defined as follows.

\begin{definition}(Safety Function \cite{bahati2025dynamic})\label{def: safety func} Let $\by = (x,y,z) \in \re^3$ represent coordinates in three dimensional space. 
We call a function $h:\Occ \rightarrow \re$ a safety function of order $k$ on $\Occ$ if 
$h$ is $k$-times differentiable, $Dh(\by) \neq 0$ when $h(\by) = 0$ , and the $0$-superlevel set of $h$ characterizes a safe set:
\begin{subequations}
\begin{align}\label{eq:safe set poisson}
   \Cc = \{\by \in \Occ: h(\by) \geq 0\},\\
      \partial \Cc = \{\by \in \Occ: h(\by) =0\},\\
          \mathrm{Int}(\Cc) = \{\by \in \Occ: h(\by) >0\}.
\end{align}
\end{subequations}
\end{definition}
%
%
Given environmental data characterizing the domain $\Occ$ through an occupancy map, Poisson safety functions \cite{bahati2025dynamic} generate safe sets satisfying Def.~\ref{def: safety func} by solving a Dirichlet problem for Poisson's equation:
\begin{gather}\label{eq: poisson's eq}
\left \{
    \begin{aligned}
        \Delta h(\by) &= f(\by)& \text{ in } \Omega,\\
        h(\by) &= 0 &  \text{ on } \partial \Omega, \\
    \end{aligned}
    \right.
\end{gather}
where \( \Delta = \frac{\partial^2 }{\partial x^2} + \frac{\partial^2 }{\partial y^2} + \frac{\partial^2 }{\partial z^2}\) is the \textit{Laplacian} and $f: \Oc \rightarrow \re_{<0}$ is a given forcing function.
As discussed in \cite{gilbarg1977elliptic}, under appropriate regularity assumptions on $\Oc$, a smooth forcing function  $f \in C^\infty(\Occ)$ yields a smooth solution $h \in C^\infty(\Occ)$ to \eqref{eq: poisson's eq}. As demonstrated in \cite{bahati2025dynamic}, this smooth solution characterizes the safe set $\Cc$ such that $\Oc = \mathrm{Int}(\Cc), \partial \Cc = \pOc$, and
may be used to construct safety filters yielding safe control actions for \eqref{eq: nl sys} under appropriate relative degree assumptions. 

\subsection{Boundary Flux}
Given an occupancy map, $\Occ$, where $\hbn:\pOc \to \re^3$ denotes the outward pointing unit normal, collision avoidance applications require a negative outward directional derivative on the boundary, 
\ie negative \textit{boundary flux} $Dh (\by) \cdot \hat{\bn} (\by) < 0$ for all $\by \in \pOc$, to encode repulsive gradients on obstacle surfaces. From Hopf's Lemma \cite{protter2012maximum}, it follows that solving the Dirichlet problem \eqref{eq: poisson's eq} with a negative forcing function in the interior, \ie $f(\by) <0$ for all $\by \in \Oc$, guarantees $Dh (\by) \cdot \hat{\bn} (\by) < 0$ for all $\by \in \pOc$. 

The ability to prescribe the magnitude of $Dh (\by) \cdot \hat{\bn} (\by)$ at each point $\by \in \pOc$ provides a way to locally encode desired gradient strengths along obstacle surfaces. This enables the ability to prescribe stronger or weaker repulsive effects, depending on the relative importance of an obstacle (or portion of the obstacle). However, \eqref{eq: poisson's eq} does not provide the ability to prescribe $Dh (\by) \!\cdot \!\hat{\bn} (\by)$ directly in a point-wise manner.
Instead, \eqref{eq: poisson's eq} constrains the boundary flux in an integral sense through the forcing function $f$. In particular, by the divergence theorem~\cite{marsden2003vector}, all forcing functions satisfy (dropping dependency on $\by$ for brevity):
\begin{align}\label{eq: divergence forcing function}
\iiint_\Oc f \, \mathrm{d} E =  \iiint_\Oc \overbrace{\nabla \cdot (Dh)}^{\Delta h} \, \mathrm{d} E = \overbrace{\oiint_{\pOc} Dh \cdot \hat{\bn}\, \mathrm{d} A}^{\text{Total Flux}}, 
\end{align}
where $\mathrm{d} E$, $\mathrm{d} A$ denote volume and area elements respectively. 

One approach of encoding desired point-wise boundary flux magnitudes, proposed in \cite{bahati2025dynamic}, was to introduce an auxiliary vector field—a \textit{guidance field}.
In that formulation, the divergence of the guidance field served as a tool for defining the forcing function $f$. 
However, the resulting desired point-wise flux condition also held only in the integral sense via a variational problem.
Moreover, the role of the guidance field was left implicit, treated merely as an axillary tool for defining a forcing function rather than a central object of study.
In this work, we make the guidance field explicit and study its central role in enforcing safety specifications directly. We introduce a definition that captures the minimal requirements that such a field must satisfy to serve as a foundation for more expressive notions of safety, enabling obstacle-specific safety behaviors.
%

\section{Risk-Aware Safety-Critical Control Using Guidance Fields}

In this section, we present a new safety constraint that leverages \textit{guidance fields} to enforce distinct gradient behavior across domain boundaries. This provides a way to assign relative importance to boundary regions, capturing obstacle risk and priorities.
%
We begin by formalizing the notion of a guidance field. Intuitively, a guidance field is a vector field that prescribes repulsive directions along obstacle surfaces, and extends these directions smoothly into the domain. 

\begin{definition}[Guidance Field]\label{def: guidance field}
Let \(\Oc \subset \mathbb{R}^3\) be an open, bounded, and connected set representing free space with smooth boundary \(\pOc\) corresponding to obstacle surfaces, and let $\hbn:\pOc \to \re^3$ denote the unit normal pointing outward from $\Oc$, \ie into the obstacles. 
Given a prescribed negative boundary flux \(b:\pOc \to \mathbb{R}_{<0}\), we call a vector field \(\bvv \in C^{k}(\overline{\Oc};\mathbb{R}^3)\) with \(k \ge 1\) a \emph{guidance field} if:
\begin{align}\label{eq: def condition}
    \bvv(\by)\cdot \hat{\bn}(\by) = b(\by), \, \quad \, \bvv(\by)\parallel\hat{\bn}(\by) \quad \text{on }\pOc,
\end{align}
and $\bvv$ is a $C^k$ extension of the flux into the interior $\Oc$. 
\end{definition}

This definition ties the guidance field to obstacle boundaries through the flux $\bvv(\by) = b(\by)\hbn(\by)$ on $\pOc$, with negative values of $b$ encoding repulsive gradients. 
For well-posedness, $b$ must be sufficiently regular to admit a continuous extension into the domain. 
The $C^k$ regularity condition ensures that derivatives are well defined in the classical sense for the divergence theorem to hold, and to provide continuous derivatives that are convenient for control design. 
An approach for constructing guidance fields satisfying Def. \ref{def: guidance field}, is based on the \textit{vector} Laplace equation  proposed in \cite{bahati2025dynamic}.

\subsection{Laplace Guidance Field}
In the vector Laplace formulation, each component of $\bvv$ is obtained by a harmonic extension of the boundary data. This produces a smooth interpolation of the boundary flux throughout the domain in each direction, yielding a smooth guidance field satisfying Def. \ref{def: guidance field}.
Specifically, consider $\vec{\bv} = (v_x, v_y, v_z) :\overline{\Oc} \rightarrow \re^3$, with each component satisfying
Laplace's equation subject to Dirichlet boundary conditions:
\begin{gather}\label{eq: guidance field}
\left \{
    \begin{aligned}
    \Delta v_i(\by) &= 0 & \text{in } \Omega, \\
    v_i(\by) &= b(\by) n_i(\by) & \text{on } \partial \Omega,
    \end{aligned}
    \right.
\end{gather}
for $i \in \{x, y, z\}$, where \( \hat{\mathbf{n}} = (n_x, n_y, n_z):\partial \Oc \rightarrow \re^3  \) denotes the outward unit normal vector such that $\bvv(\by) = b(\by) \hat{\bn}(\by)$  on \( \partial \Omega \), and
\( b : \pOc \to \mathbb{R}_{<0} \) prescribes the outward directional derivative encoding the desired boundary flux encoding repulsive gradients.
Although \eqref{eq: guidance field} produces a smooth vector field $\bvv \in C^\infty(\Occ; \re^3)$ that matches the boundary specifications \eqref{eq: def condition}, 
the decoupled nature of the components of $\bvv$ in the vector Laplace formulation \eqref{eq: guidance field} makes the field generally \textit{nonconservative}, meaning it is not the gradient of a potential \cite{marsden2003vector}. However, since $\bvv$ satisfies \eqref{eq: def condition} by construction, it can be directly used to enforce safety without the need of a potential, which we discuss next. 

\subsection{Risk-Aware Safety Filters}
Given a safety function $h$ defining a safe set $\Cc$ as in Def.\;\ref{def: safety func}, the classical CBF condition \eqref{CBF Condition} uses the gradient $Dh$ to enforce safety. However, directly prescribing desired gradient magnitudes along $\partial \Cc$ with $Dh$ is typically not possible. We generalize the classical CBF formulation  \eqref{CBF Condition} by decoupling the vector field used in the gradient term from the function $h$.
%
In particular, we introduce the guidance field $\bvv$, which enables local, pointwise design of boundary-normal gradients (\ie boundary flux) while maintaining safety. In this sense, the CBF condition \eqref{CBF Condition} is reformulated with $\bvv$, opening new directions for safe vector-field generation.

We focus on systems defined by integrator chains as in \eqref{eq: linear output dynamcis1}, with the input appearing at the last layer; note, however, that our method can be extended to classes of systems with outputs of non-uniform relative degree \cite{cohen2024constructive, bahati2025control}. 
To formalize this, we begin with first-order systems.

\subsubsection{First Order Systems}
Consider the single integrator dynamics (relative degree $r=1$):
 \begin{align}\label{eq: single integrator}
     \dot{\vec{\by}} = \bw,
 \end{align}
 where the state $\vec{\by} = \by \in \re^3$. Given a guidance field $\bvv$, the following proposition establishes safety for \eqref{eq: single integrator}.

 \begin{proposition}(Forward Invariance of First Order Systems)\label{prop: forward invariance of first order systems}
 \label{prop: static constraint}
    Let $\Omega \subset \re^3$ be an open, bounded, and connected set with smooth boundary $\pOc$ and outward pointing normal $\hat{\bn}: \pOc \rightarrow \re^3$. Consider the system  \eqref{eq: single integrator} and a safe set, $\Cc$, defined as the $0$-super-level set of a safety function $h:\Occ \rightarrow \re$ as in Def. \ref{def: safety func} and satisfies $h(\by) = 0$ on $\partial \Cc = \pOc$. Suppose that $\bvv \in C^1(\Occ;\re^3)$ is a vector field satisfying $\bvv(\by) = b(\by) \hat{\bn}(\by)$ on $\pOc$ for a negative boundary flux $b:\pOc \rightarrow \re_{<0}$ as in Def. \ref{def: guidance field}, then for any locally Lipschitz continuous controller $\bk : \Occ \rightarrow \re^3$ satisfying: 
%
    \begin{align}\label{eq: proposition 1}
        \bvv(\by) \cdot \bk(\by)  \geq - \gamma h(\by) \quad \forall \, \by \in \Cc,
    \end{align}
   for some $\gamma > 0$, the set $\Cc$ is rendered forward invariant.
 \end{proposition}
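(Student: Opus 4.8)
The plan is to reduce forward invariance of $\Cc$ to a boundary tangency condition and then verify that condition using the geometry of $Dh$ and $\bvv$ on $\partial\Cc$. Since $\Cc = \{\by \in \Occ : h(\by) \geq 0\}$ is closed, the closed-loop vector field $\bk$ is locally Lipschitz, and $Dh(\by)\neq \mathbf{0}$ whenever $h(\by)=0$ by Def.~\ref{def: safety func}, Nagumo's invariance theorem applies: $\Cc$ is forward invariant if and only if the tangency condition $Dh(\by)\cdot\bk(\by)\geq 0$ holds at every boundary point $\by\in\partial\Cc$. Thus it suffices to establish this single inequality on $\partial\Cc = \pOc$.

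Next I would pin down the directions of $Dh$ and $\bvv$ on the boundary. On $\partial\Cc$ we have $h(\by)=0$ while $h(\by)>0$ throughout the interior $\mathrm{Int}(\Cc)=\Oc$; hence $Dh(\by)$ points into $\Oc$, \ie opposite to the outward normal $\hat{\bn}$, giving $Dh(\by) = -\|Dh(\by)\|\,\hat{\bn}(\by)$ with $\|Dh(\by)\|>0$. By Def.~\ref{def: guidance field} the guidance field satisfies $\bvv(\by) = b(\by)\hat{\bn}(\by)$ with $b(\by)<0$, so $\bvv$ is likewise anti-parallel to $\hat{\bn}$. Consequently the two fields are positively proportional on the boundary,
\[
\bvv(\by) = \alpha(\by)\,Dh(\by), \qquad \alpha(\by) \coloneqq \frac{-b(\by)}{\|Dh(\by)\|} > 0, \qquad \by\in\partial\Cc.
\]

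Finally I would evaluate the controller constraint \eqref{eq: proposition 1} on the boundary. Since $h(\by)=0$ there, \eqref{eq: proposition 1} reduces to $\bvv(\by)\cdot\bk(\by)\geq 0$; substituting the proportionality yields $\alpha(\by)\,\big(Dh(\by)\cdot\bk(\by)\big)\geq 0$, and because $\alpha(\by)>0$ we obtain $Dh(\by)\cdot\bk(\by)\geq 0$ for all $\by\in\partial\Cc$. This is exactly the Nagumo tangency condition, so $\Cc$ is rendered forward invariant, completing the argument.

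The main obstacle is the directional bookkeeping in the second step: one must carefully use the sign conventions ($h>0$ inside, $\hat{\bn}$ pointing outward into obstacles, $b<0$) to conclude that $\bvv$ and $Dh$ genuinely \emph{align} on $\partial\Cc$, rather than merely being parallel up to sign. It is also worth emphasizing why the argument is localized to the boundary: away from $\partial\Cc$ the guidance field $\bvv$ is a nonconservative $C^k$ extension and is generally not proportional to $Dh$, so the usual comparison-lemma route (which would require $\dot h = Dh\cdot\bk \geq -\gamma h$ throughout $\Cc$) is unavailable. The interior hypothesis in \eqref{eq: proposition 1} serves only to make $\bk$ well-defined and locally Lipschitz on $\Cc$, while invariance is decided entirely by the boundary tangency that Nagumo's theorem requires.
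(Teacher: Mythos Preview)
Your proof is correct and follows essentially the same route as the paper: reduce forward invariance to Nagumo's tangency condition on $\partial\Cc$, observe that $\bvv$ and $Dh$ are \emph{positively} proportional there (since $b<0$ and $Dh$ points into $\Oc$), and then use the constraint \eqref{eq: proposition 1} at $h=0$ to obtain $Dh\cdot\bk\geq 0$. The only cosmetic difference is that the paper phrases the sign bookkeeping via Hopf's Lemma (writing $\hat{\bn}=c\,Dh/\|Dh\|$ with $c\equiv-1$), whereas you argue it directly from $h>0$ in $\mathrm{Int}(\Cc)$; both amount to the same observation.
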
 

\begin{proof}
     Since we have that $\bvv(\by) = b(\by) \hat{\bn}(\by)$ on $\pOc$ and $h(\by)=0$ on $ \partial \Cc = \partial \Oc$, then there exists a controller $\bk$ enforcing $\bvv(\by) \cdot \bk(\by) = b(\by) \hat{\bn}(\by) \cdot  \bk(\by) \geq 0$ for all $\by \in  \partial \Cc$. Since $\pOc$ is the $0$-level set of $h$, we have:
     \begin{align}
         \hbn(\by) = c(\by)\frac{Dh (\by)}{\|Dh(\by)\|},
     \end{align}
     where $c:\pOc \rightarrow \re_{<0}$ from Hopf's Lemma\footnote{Generally $c(\by)\!\in\!\{+1,-1\}$; if $h(\by)>0$ in $\Oc$, then $c(\by)\!\equiv\! -1$ on $\pOc$.} \cite{protter2012maximum, bahati2025dynamic}.
     Thus, the enforced inequality can be rewritten as: 
     \begin{align}
         b(\mb{y})\hat{\mb{n}}(\mb{y}) \cdot \mb{k}(\mb{y}) = b(\mb{y})\frac{c(\mb{y})}{\Vert D h(\mb{y})\Vert} Dh(\mb{y}) \cdot \mb{k}(\mb{y}) \geq 0, \\
         \implies \dot{h}(\mb{y}) = Dh(\mb{y}) \cdot \mb{k}(\mb{y}) \geq 0,\label{eq: final nagumo's}
    \end{align}
    for all $\mb{y} \in \partial \Omega$, where the implication follows from the strict negativity of $b(\mb{y})$ and $c(\mb{y})\equiv -1$, the fact that $\Vert Dh(\mb{y})\Vert>0$ on $\partial \Omega$, and the single integrator dynamics \eqref{eq: single integrator}. 
     In particular, \cite{CohenLCSS23} provides examples of a locally Lipschitz continuous controllers satisfying \eqref{eq: final nagumo's}.
 Therefore, from Nagumo's theorem, the set $\Cc$ is a rendered forward invariant for the system \eqref{eq: single integrator}.
\end{proof}
The above proposition states that if the guidance field $\bvv$ is normal to $\pOc$ towards the interior $\Oc$, then a controller enforcing forward invariance of the safe set $\Cc$ exists. 
Given a nominal controller $\bk_\nom:\Occ \rightarrow \re^3$, a safety function $h$, and a guidance field $\bvv$ with negative boundary flux,
one example of a safe controller is the QP-based safety filter:
\begin{align}
    \bk_\mathrm{QP}(\by) = &\argmin_{\bw \in \re^3} \quad \|\bw - \bk_{\nom}(\by)\|_2^2 \label{eq: risk-aware safety filter}
    \\
    & \quad 
 \mathrm{s.t.} \quad \bvv (\by) \cdot \bw  \geq - \gamma h(\by), \label{eq: safety filter single int constraint}
\end{align}
whose closed-form expression is:
%
%
\begin{align}\label{eq: k_qp}
    \bk_{\mathrm{QP}}(\by) = \bk_{\mathrm{nom}}(\by) + \frac{\mathrm{ReLU}(-a(\by))}{\|\bvv(\by)\|^2}\,\bvv(\by),
\end{align}
where $\mathrm{ReLU}(-a(\by)) \coloneqq \max\{0,-a(\by)\}$ is an \textit{activation function} with $a: \Occ \rightarrow \re$ given by:
\begin{align}\label{eq: activation}
    a(\by) := \,\bvv(\by) \cdot \bk_{\mathrm{nom}}(\by) + \gamma h(\by).  
\end{align}
The controller \eqref{eq: k_qp} modifies the nominal input $\bk_{\mathrm{nom}}$ in the direction of $\bvv$ in a minimally invasive fashion whenever $\bk_{\mathrm{nom}}$ violates the CBF constraint \eqref{eq: safety filter single int constraint}. 
Specifically, the second term on the right-hand side of \eqref{eq: k_qp} is a correction term, and the role of $\mathrm{ReLU}(-a(\by))$ is to activate the modification only when $a(\by) \leq 0$. Concretely, if $a(\by) > 0$, then $\bk_{\mathrm{nom}}$ satisfies the constraint and the filter remains inactive, and if $a(\by) \leq 0$, the filter is active and the nominal controller is modified along $\bvv(\by)$.  
This implies that the filter is triggered into activation exactly on the $0$-level set of $a$, i.e.\ when $a(\by)=0$. 
We define these regions 
as \textit{activation zones}, whose size and shape are dictated by $\bk_{\mathrm{nom}}$, $\bvv$, $h$ and $\gamma$ as follows:

\begin{definition}[Activation Zone]
Let $\bk_{\mathrm{nom}}:\Occ \to \re^3$ be a nominal controller, $\bvv:\Occ \to \re^3$ a guidance field, $h:\Occ \to \re$ a safety function and $\gamma >0$.
The \textit{activation zone} is the set:
\begin{align}\label{eq: activation set}
    \Ac \coloneqq \{ \, \by \in \Occ \mid a(\by) = 0 \, \},
\end{align}
where $a(\by)$ is defined in \eqref{eq: activation}.
\end{definition}

\begin{figure}[t!]
\centering
\includegraphics[width=1\linewidth]{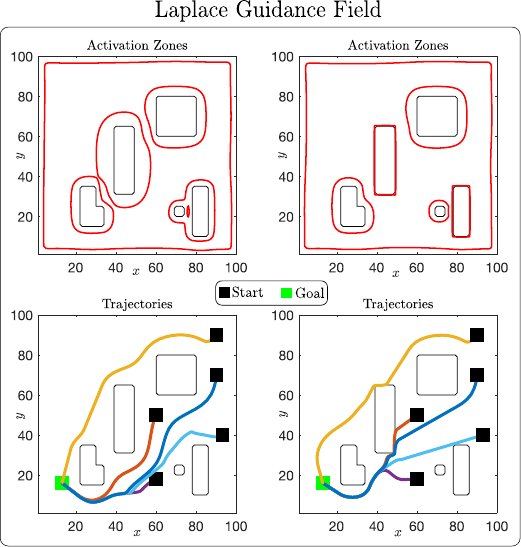}
\caption{\small{Comparison of activation zones \eqref{eq: activation} for varying boundary flux with, decreasing the flux for the center and bottom right obstacles. Reducing $\lVert b(\by) \rVert$ yields shallower boundary gradients and smaller activation zones,
allowing trajectories closer to the obstacle, whereas larger $\lVert b(\by) \rVert$ produces steeper gradients, larger activation zones and more conservative behavior. \textbf{Top row:} Activation zones \eqref{eq: activation set} with $\bk_{\mathrm{nom}}(\by)\!=\!-\mu Dh(\by), \mu\!>\!0$, driving in the direction of steepest decrease of $h$ (\ie the worst-case, adversarial direction).
\textbf{Bottom row:} Closed-loop trajectories for \eqref{eq: single integrator} with \eqref{eq: k_qp} and $\bk_{\mathrm{nom}}(\by)\!=\!-\mu(\by-\by_{\mathrm d})$, driving towards the goal $\by_{\mathrm d}$.
}}
\label{fig: activation_zones}
\vspace{-4mm}
\end{figure}
Activation zones reveal the regions where the safety filter responds to each obstacle, resulting in distinctive behaviors around individual obstacles; a larger activation zone means that the filter reacts to that obstacle sooner.
Adjusting $\gamma$ uniformly expands or contracts all activation zones in the same way, resulting in a global effect.
In contrast, isolated behavior around obstacles can be achieved by using a guidance field that enforces a pointwise boundary flux $b(\by)$ at each $ \by \in \pOc$. 
Figure~\ref{fig: activation_zones} shows how varying  the boundary flux $b$ in the guidance field $\bvv$ generated by the Dirichlet problem for Laplace \eqref{eq: guidance field} produces different activation zones.

The above result can be extended to facilitate dynamic environments with moving obstacles \cite{bena2025geometry}. The following corollary establishes the forward invariance for the single integrator system \eqref{eq: single integrator} for changing environments.

\begin{corollary}(Forward Invariance of Time-Varying Safe Sets)
    Let the assumptions of Proposition~\ref{prop: forward invariance of first order systems} hold. 
    Let $h:\re^3 \times [0, T] \rightarrow \re_{\geq 0}$ be continuously differentiable in $t$ for each $\by$, \ie $h(\by, \cdot) \in C^1([0,T])$ for some $T>0$, such that the time-varying $0$-superlevel set of $h$ characterizes the safe set:
    \begin{align}
        \Cc_\mathcal{T}(t) = \{\,\by \in \re^3 \mid h(\by,t) \geq 0 \,\}, \quad t \in [0,T],
    \end{align} 
    with $h(\by,t)=0$ on $\partial \Cc_{\Tc}(t)$. 
    Let $\sigma: \re_{\geq 0} \rightarrow [0, \epsilon]$ be a smooth transition function, for some $\epsilon > 0$ satisfying $\sigma(0) = 0$ and  $\sigma(a) \rightarrow \epsilon$ as $a \rightarrow \infty$.
    %
    %
     If there exists a locally Lipschitz continuous controller $\bk : \re^3 \to \re^3$ such that, for any $\gamma > 0$ and all $(\by, t) \in \re^3 \times [0, T]$: 
     %
\begin{equation}\label{eq: dynamic v}
    \bvv(\by)\cdot \bk(\by) +
    \frac{\lVert \bvv(\by)\rVert}{\lVert Dh(\by,t)\rVert + \sigma(h(\by,t))}
    \cdot \frac{\partial h}{\partial t}(\by,t)
    \geq -\gamma h(\by,t),
\end{equation}
then the set $\Cc_\mathcal{T}(t)$ is rendered forward invariant $\forall t \in [0,T]$.
\end{corollary}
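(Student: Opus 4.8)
The plan is to reduce this time-varying statement to the boundary (Nagumo-type) argument already used in Proposition~\ref{prop: forward invariance of first order systems}, the only new ingredient being the explicit time dependence $\frac{\partial h}{\partial t}$. First I would differentiate $h$ along a closed-loop trajectory $\by(t)$ of the single integrator \eqref{eq: single integrator}: since $\dot{\by} = \bw = \bk(\by)$, the chain rule gives $\frac{d}{dt}h(\by(t),t) = Dh(\by,t)\cdot \bk(\by) + \frac{\partial h}{\partial t}(\by,t)$. By the time-varying version of Nagumo's theorem, to certify forward invariance of $\Cc_\Tc(t)$ it suffices to show that this total derivative is nonnegative whenever the trajectory meets the moving boundary $\partial\Cc_\Tc(t) = \{\by : h(\by,t) = 0\}$.

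Next I would evaluate condition \eqref{eq: dynamic v} on that boundary. There $h(\by,t) = 0$, so the transition function satisfies $\sigma(h(\by,t)) = \sigma(0) = 0$ and the right-hand side $-\gamma h(\by,t)$ vanishes for every $\gamma > 0$. Exactly as in the proof of Proposition~\ref{prop: forward invariance of first order systems}, I would use the boundary relation $\bvv(\by) = b(\by)\hbn(\by)$ together with $\hbn(\by) = -Dh(\by,t)/\|Dh(\by,t)\|$ (Hopf's Lemma forces $c \equiv -1$, and $\|Dh(\by,t)\| > 0$ on the boundary by the safety-function property that $Dh \neq 0$ when $h = 0$). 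This yields $\bvv(\by) = \frac{|b(\by)|}{\|Dh(\by,t)\|}\,Dh(\by,t)$, hence $\|\bvv(\by)\| = |b(\by)|$ and $\frac{\|\bvv(\by)\|}{\|Dh(\by,t)\|} = \frac{|b(\by)|}{\|Dh(\by,t)\|}$, the same positive scalar that multiplies $Dh\cdot \bk$ inside $\bvv \cdot \bk$.

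Substituting these relations into \eqref{eq: dynamic v} at $h = 0$, both terms on the left share the common positive factor $\alpha(\by) := |b(\by)|/\|Dh(\by,t)\|$, so the inequality collapses to $\alpha(\by)\big(Dh(\by,t)\cdot \bk(\by) + \frac{\partial h}{\partial t}(\by,t)\big) \geq 0$. Dividing through by $\alpha(\by) > 0$ recovers precisely $\frac{d}{dt}h(\by(t),t) \geq 0$ on $\partial\Cc_\Tc(t)$, which is the subtangentiality condition, and forward invariance of $\Cc_\Tc(t)$ for all $t \in [0,T]$ follows.

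I anticipate the main obstacle to be two-fold. The substantive point is recognizing and justifying the cancellation: the correction factor $\|\bvv\|/(\|Dh\| + \sigma(h))$ is engineered so that on the boundary it matches exactly the scalar relating $\bvv$ to $Dh$, converting the nonconservative left-hand side into the genuine total derivative $\dot{h}$; checking that $\alpha(\by)$ is well defined and strictly positive on the boundary (via $b(\by) < 0$ and $Dh(\by,t) \neq 0$) is essential here. The second, more technical point is invoking a correct time-varying forward-invariance result—the set itself moves, so I must confirm that the Nagumo boundary condition remains sufficient for a time-dependent $h$, and remark that $\sigma(h)$, which vanishes on the boundary and is bounded by $\epsilon$ in the interior, plays no role in the invariance certificate but keeps the denominator bounded away from zero, and hence $\bk$ locally Lipschitz, at interior critical points of $h$ where $Dh$ may vanish.
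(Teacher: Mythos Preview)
Your proposal is correct and follows essentially the same route as the paper's proof: restrict \eqref{eq: dynamic v} to the boundary where $h=0$ and $\sigma(0)=0$, use $\bvv \parallel Dh$ to factor out the common positive scalar $\|\bvv\|/\|Dh\|$ (your $\alpha$), and conclude $\dot h \ge 0$ on $\partial\Cc_\Tc(t)$ via Nagumo. Your additional remarks on why $\alpha>0$ and on the interior role of $\sigma$ go slightly beyond the paper's sketch but are consistent with it.
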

\begin{proof}
    The result follows by applying the proof of Proposition~\ref{prop: static constraint} to the time-varying case, noting that the coefficient of  $\frac{\partial{h}}{\partial{t}}$ ensures $\dot{h}(\by,t) \geq 0$ on the boundary. That is, on boundary, since we have have that $\bvv(\by) \parallel Dh (\by, t)$, we have (dropping dependency on $(\by, t)$ for brevity):
    \begin{align}
        \bvv = \frac{\|\bvv\|}{\|Dh\|}Dh \text{ on } \pCc_{\Tc}(t),
    \end{align}
    and $\sigma(0) = 0$. Then \eqref{eq: dynamic v} reduces to:
     \begin{align}
        \bvv \cdot \bk + \frac{\|\bvv\|}{\|Dh\|}\frac{\partial h}{\partial t} &=  \frac{\|\bvv\|}{\|Dh\|}Dh \cdot \bk + \frac{\|\bvv\|}{\|Dh\|}\frac{\partial h}{\partial t}\\
        &=  \frac{\|\bvv\|}{\|Dh\|}\left ( Dh \cdot \bk + \frac{\partial h}{\partial t}\right)\\
        &= \frac{\|\bvv\|}{\|Dh\|} \dot{h} \geq 0. 
    \end{align}
    Therefore, by enforcing \eqref{eq: dynamic v}, we equivalently enforce $\dot{h} \geq 0 $ on $\partial \Cc_{\Tc}(t)$, rendering the set $\Cc_{\Tc}(t)$ forward invariant.
\end{proof}

    

Naturally, from this result and in analogy with the static case,  the corresponding activating function  $(\by, t) \mapsto a(\by, t)$ for the dynamic environment is defined as:
\begin{align}\label{eq: activation dynamic}
    a :=  \,\bvv \cdot \bk_{\mathrm{nom}} + \frac{\lVert \bvv \rVert}{\lVert Dh \rVert + \sigma(h)} \cdot \frac{\partial{h}}{\partial{t}}+\gamma h.
\end{align}
The associated activation zones are given by the zero level set of the dynamic activation function \eqref{eq: activation dynamic}. The above results also hold for time-varying guidance fields $(\by, t) \mapsto \bvv(\by, t)$.

\subsection{High Order Systems}
The above results can be extended to high-order systems of the form \eqref{eq: linear output dynamcis1} with relative degree $r\geq 2$ by leveraging CBF backstepping \cite{AJT_backstepping}.
In the first-order case, the safety function $h$ only needs to be continuous. However, for higher-order systems, additional regularity of $h$ is required. To illustrate this, for convenience of exposition, we consider systems
of relative degree $r =2$.
We consider the system $\dot{\bvy} = [\by, \bw]^\top$, where the state is $\bvy = [\by, \dot{\by}]^\top \in \re^4$.
Given a twice continuously differentiable safety function $h \in C^2(\Occ)$ defining the safe set $\Cc$, define:
\begin{align}
    h_\mathrm{B} = h - \frac{1}{2\mu}\|\dot{\by} - \bk_{\bvv} \|^2,
\end{align}
with $\mu > 0$ where $\bk_{\bvv} \in C^2(\Occ;\re^3)$ satisfies $\bvv \cdot \bk_{\bvv} \geq -\gamma h$ for all $\by \in \Cc$ \cite{cohen2024constructive, bahati2025dynamic}. A smooth example, $\bk_{\bvv} \in C^\infty(\Occ;\re^3)$, that remains close to the min-norm activation zones is given in \cite{CohenLCSS23}.
The $0$-superlevel set of$h_\mathrm{B}$ defines the shrunken set:
\begin{equation}
    \Cc_B = \{\, \by \in \mathbb{R}^6 \mid h_\mathrm{B}(\by) \ge 0 \,\} \subset \Cc \times \mathbb{R}^3.
\end{equation}
Since $h_\mathrm{B}(\by) \le h(\by)$ for all $\by \in \Cc$, a guidance field $\bvv$ can be used to ensure that all trajectories starting in $\Cc$ remain in $\Cc$ by rendering $\Cc_\mathrm{B}$ safe. This is possible because $\by \in \Cc_\mathrm{B} \cap \partial \Cc$ implies $\dot{\by} = \bk_{\bvv}$
and under this condition, $\bvv \cdot \bk_{\bvv} \geq -\gamma h$ guarantees that $\bvv(\by)\cdot \dot{\by} > 0$
at all boundary points. 
Specifically, this leads to the following safety constraint:
 \begin{align}
 \bvv \cdot\dot{\by} - 
 \frac{1}{\mu}(\dot{\by} - \bk_{\bvv})  \dot{\bk}_{\bvv}  +
  \frac{1}{\mu}(\dot{\by} - \bk_{\bvv})\bw
 \geq - \gamma h_\mathrm{B}.
\end{align}

Note that $\bk_{\bvv}$ typically depends on $h$, so computing its gradients requires higher-order derivatives of $h$. For this reason, constructing $h$ as a Poisson safety function \eqref{eq: poisson's eq} is advantageous, since its smoothness guarantees the necessary regularity~\cite{bahati2025dynamic}. Leveraging the above result, it follows from \cite[Theorem 4]{AJT_backstepping} that there exists a locally Lipschitz continuous controller rendering $\Cc_\mathrm{B}$ forward invariant. Specifically, if $\by_0 = (\by_0, \dot{\by}_0) \in \Cc_\mathrm{B}$, then $\by(t) \in \Cc_\mathrm{B}$ for all $t \in I_{\max}(\by_0)$. 
Extensions to higher-order systems with relative degree $r>2$ follow by the same construction; see \cite{bahati2025dynamic}.

\begin{algorithm}
\caption{Construct Risk-Aware Safety Filter} \label{alg:main}
    \small
    \KwIn{$\Omega, \partial \Omega,  \mathcal{F}, \mathcal{P}, w, \Phi, d, f, \mathbf{k}_\mathrm{nom}, \gamma$} 
    
    $\partial \Omega_d, \Omega_d \gets \textup{discretize}(\partial \Omega, \Omega, d$)


       $\mathcal{Z} \gets \{ (\mb{y}, f(\mb{y}))\} \text{ for each } \by \in \Omega_d$

       $\mathcal{Y} \gets \{\} $  
    
    \For{$\mb{y} \in \partial\Omega_d$}{
    
        $\textup{feature} \gets \mathcal{F}(\mb{y})$

        $\textup{priority} \gets \mathcal{P}(\textup{feature})$

        $\textup{risk} \gets w(\textup{priority})$

        $b \gets \Phi(w)$

        $\mathcal{Y}\gets \mathcal{Y} \cup \{ (\mb{y}, b)\} $

    }

    $h \gets \textup{DirichletForPoisson}(\Omega_d, \partial \Omega_d, \mathcal{Z})$ \tcp{As in \eqref{eq: poisson's eq}}

    $\vec{\mathbf{v}} \gets \textup{DirichletForLaplace}(\Omega_d, \partial \Omega_d, \mathcal{Y})$ \tcp{As in \eqref{eq: guidance field}}

    $\mathbf{k}_\textup{QP} \gets \textup{BuildSafetyFilter}(\mathbf{k}_\mathrm{nom}, \gamma, h,  \vec{\mathbf{v}})$ \tcp{As in \eqref{eq: safety filter single int constraint}}

    \KwOut{$\mathbf{k}_\textup{QP}$}
    
\end{algorithm}
\vspace{-4mm}

\section{Encoding Variable Conservatism via Boundary Flux}

In safety-critical applications, different obstacles typically demand different levels of caution. 
The \emph{boundary flux} of a guidance field provides a principled way to encode such priorities directly on obstacle surfaces. 
In particular, the boundary flux $b$ quantifies how strongly the guidance field $\bvv$ points outward along the surface normal, $\vec{\mb{v}}= b \hat{\mb{n}}$, providing a way to encode desired levels of caution around an obstacle:
\[
b(\by) \longrightarrow \text{Level of caution at point } \by \in \pOc.
\]
More concretely, by prescribing boundary flux values $b$, we control how activation zones expand or contract around obstacles:
large magnitudes create wider activation zones, while smaller magnitudes yield thinner zones as in Fig. \ref{fig: activation_zones}. 

To make this process constructive, we begin by using a function $\mathcal{F}: \partial \Omega \to \mathcal{L}$ to associate each obstacle position $\mb{y} \in \partial \Omega$ with a \textit{feature} from a set of labels $\mathcal{L}$ that characterizes a desired property of an obstacle or part of an obstacle (e.g., whether the obstacle is a human or a wall). 
We then use a priority function $\mathcal{P} : \mathcal{L} \rightarrow \mathbb{R}_{\geq 0}$ to order the set of labels $\mathcal{L}$, incorporating a user-defined \emph{priority ranking}, specifying which obstacles (or parts of obstacles) demand more caution than others according to the user's preference.
The priorities are then converted to ``risk'' values, which are used to design the flux of the guidance field, yielding the procedure: 
\begin{align}
    \text{State} \; \mapsto \; \text{Feature} \;\mapsto \; \text{Priority} \;\mapsto\; \text{Assigned Risk} \;\mapsto\; \text{Flux}. \nonumber
\end{align}

To ensure that the assigned risks lie on a consistent scale, we normalize them to lie within the interval $[0,1]$ with a risk-assignment map $w: \R_{\geq 0 } \to [0,1]$.
If the range of $\Pc$ is bounded (e.g., probabilities), these values can directly define risk; for example, by rescaling or using the identity map $w(p) = p$ if $p \in[0,1]$. However, if the range of $\Pc$ is not bounded (e.g., when priority is induced by obstacle speed), we use a smooth, bounded, monotonic risk-assignment $w$ to induce the $[0,1]$ bound. Finally, the bounded risk is then mapped to flux values via the function $\Phi: [0,1] \to \R_{<0}$.

\subsection{Constructing Risk-Aware Safety Filters}

\begin{figure}[t!]
    \centering   \includegraphics[width=1.0\linewidth]{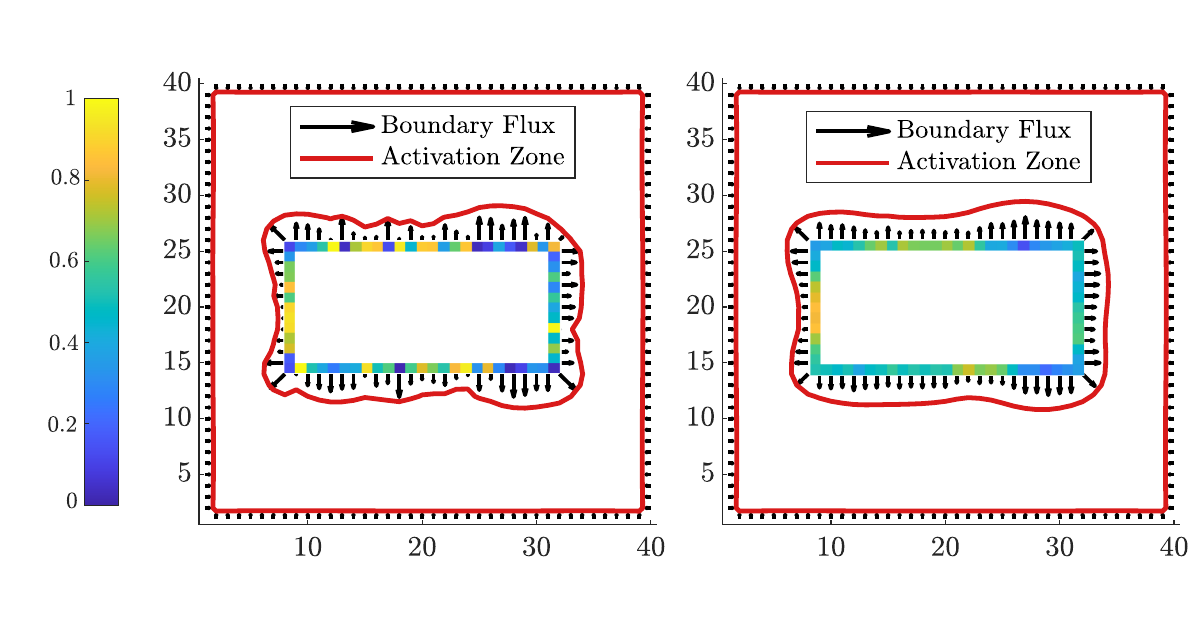}
   \vspace{-10mm}
    \caption{\small{Activation zones based on obstacle uncertainty. Left: Without smoothing, boundary fluxes vary irregularly. Right: Smoothing forces spatial regularity. High confidence regions (yellow) yield tighter activation zones, while low confidence regions (blue) yield expanded activation zones, reflecting the risk prioritization. }}
    \label{fig: location}
    \vspace{-4mm}
\end{figure}

Next, we provide  Algorithm \ref{alg:main} for constructing risk-aware safety filters from environment descriptions, priority rules, and risk assignments.
Using this algorithm requires the following additional inputs: $d\in \R_{>0}$, the discretization resolution size for solving the Poisson and Laplace problems; $f$, the forcing function used when solving for the Poisson safety function $h$ in \eqref{eq: poisson's eq}; $\mb{k}_\textup{nom}: \R^n \to \R^m$, the potentially unsafe nominal control action; and $\gamma\in \R_{>0}$, the scalar value used in the safety constraint as in \eqref{eq: safety filter single int constraint}. The output of Algorithm \ref{alg:main} is a controller, $\mb{k}_\textup{QP}$, as in \eqref{eq: risk-aware safety filter} that guarantees safety and has tunable activation zones that are a result of the risk assigned to the features in the environment. 


\begin{figure*}
    \centering
    \includegraphics[width=1.0\linewidth]{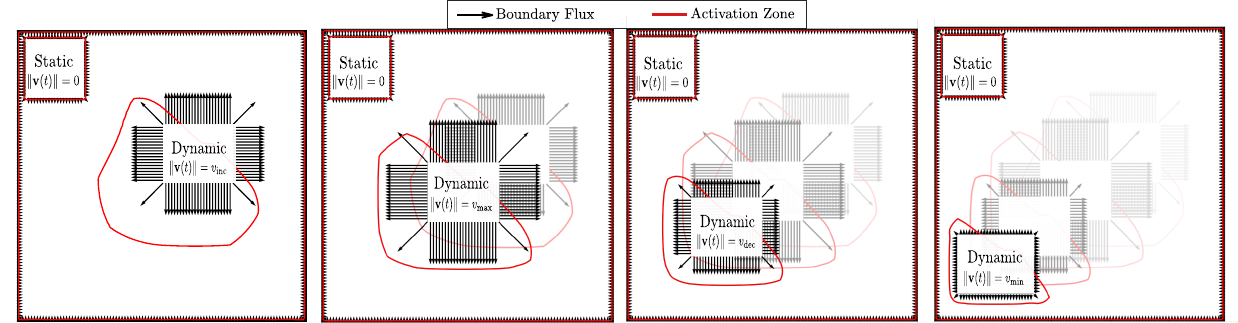}
    \caption{\small{Time-lapse of the activation zones with a moving object in the environment, defined by the zero level set of the function \eqref{eq: activation dynamic}. The object's velocity increases to a maximum before decreasing to a minimum. Larger velocities yield higher boundary fluxes and hence larger activation zones. These zones extend in the direction of motion and are affected by the object's acceleration; they expand during acceleration and contract during deceleration, reflecting the rate of change of the safe set.}}
    \label{fig: velocity}
    \vspace{-4mm}
\end{figure*}

This algorithm is intentionally very general and often simplifies significantly through the choices of inputs. To illustrate this simplicity and the practical utility of this algorithm, 
we provide several examples of how risk-aware safe controllers and their associated activation zones can be generated from relevant environmental features.

\section{Case Studies: Environmental Features}

In this section, we provide example applications of Algorithm \ref{alg:main} to three scenarios: (A) uncertain surface boundaries, (B) dynamic obstacles, and (C) objects in the environment with varying levels of user-assigned semantic priority.

\subsection{Geometric Features: Obstacle Uncertainty}

While location information helps to estimate an obstacle's position, sensors and mapping algorithms introduce significant uncertainty into these estimates. Thus, in this example, we introduce variable conservatism of the safety filter depending on the surface's uncertainty. 

To apply Algorithm \ref{alg:main}, we define the feature map to be the probability of occupancy at the surface location: 
\begin{align}
    \mathcal{F}(\mb{y}) = \mathbb{P}\left(\textup{ occupied at }\mb{y} ~\bigg|~ \sum_{i=1}^{n_{\textup{meas}}} \hat{\mb{m}}_i\right) \in [0,1], 
\end{align}
where $n_\textup{meas}\in \mathbb{N}$ is the number of noisy sensor measurements $\hat{\mb{m}}$. Next we let $\mathcal{P}(p) = 1 - p$ so that low probability (i.e., high uncertainty) are high priority and we let $w$ be the identity function to preserve these risk values. Finally we choose $\Phi$ to linearly interpolate between $b_{\textup{min}}$  and $b_\textup{max}$ so that smaller probabilities lead to larger activation zones and larger probabilities lead to smaller activation zones. The results of this method can be seen in Figure \ref{fig: location} where we see that high-confidence regions (yellow) result in activation regions closer to the surface of $\partial\Omega$ and low confidence regions (blue) result in activation regions that are further away from $\partial \Omega$, leading to more conservatism near
uncertain regions of the obstacle.

\subsection{Dynamic Features: Obstacle Motion}

Motion describes how an obstacle or region evolves over time and introduces additional features such as velocity and acceleration which are continuous and potentially unbounded. These dynamic features may require additional normalization to be incorporated into Algorithm \ref{alg:main}.

First, we define a point's ``feature'' to be the magnitude of its velocity, i.e. $\mathcal{F}(\mb{y}) = \Vert \dot{\mb{y}}\Vert$ for $\mb{y}\in \partial \Omega$. Second, since we prefer our system to be more cautious around high-velocity obstacles, we let the priority function $\mathcal{P}$ be the identity function, meaning that faster objects have higher priority. Next, since $\Vert \dot{\mb{y}}\Vert $ can be unbounded, we normalize this priority value to determine the assigned risk using the function $w(r) = r / (v_\textup{ref} + r)$ so that: 
\begin{align}
    w(\mathcal{P}(\mathcal{F}(\mb{y}))) = \frac{\Vert \dot{\mb{y}}\Vert}{v_\textup{ref} + \Vert \dot{\mb{y}}\Vert } \in [0, 1],
\end{align}
where $v_\textup{ref}\in \R_{>0}$ is a reference speed such that $w(\mathcal{P}(\mathcal{F}(\mb{y}))) = 0.5$ when the speed of the obstacle at $\mb{y}\in \partial\Omega$ is $\Vert\dot{\mb{y}}\Vert = v_\textup{ref}$. Finally, we again define $\Phi$ as a function that linearly interpolates between $b_\textup{min}$ and $b_\textup{max}$ using the risk value. 
The results of this method can be seen in Figure \ref{fig: velocity} which 
 shows a time-lapse of a moving obstacle and the evolution of the flux values and dynamic activation zones.  Here we see that dynamic obstacles are surrounded by larger activation zones than static obstacles, and the activation zones grow uniformly with the obstacle's speed. The activation zones are aligned with the direction of motion and expand in this direction as it accelerates.

\subsection{Semantic Features: Object Type and Priority} 

\begin{figure*}[htbp]
  \centering
  \begin{subfigure}[t]{0.32\textwidth}
    \centering
    \includegraphics[width=\linewidth]{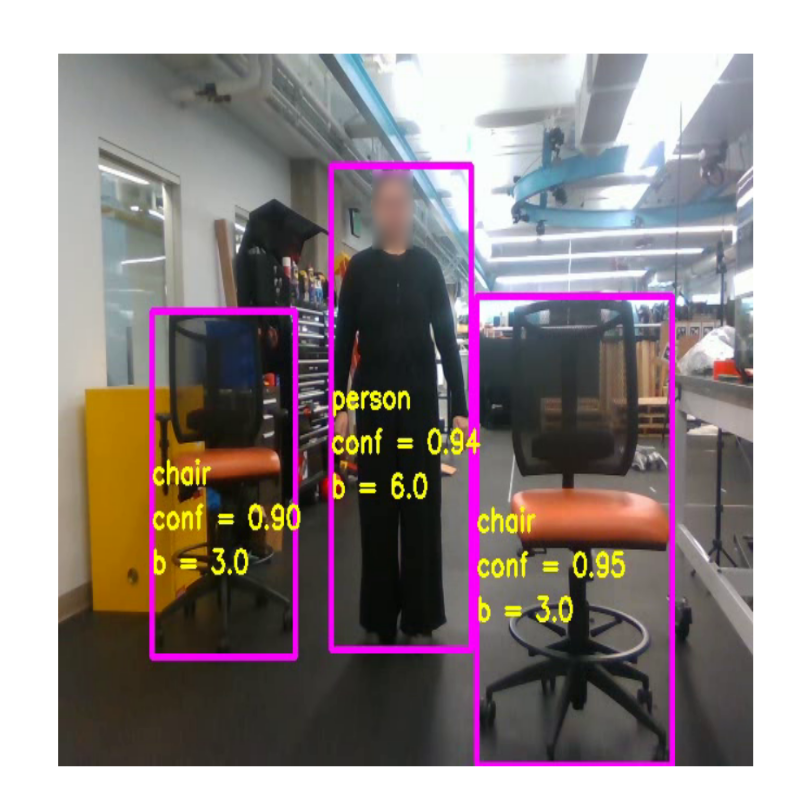}
    \vspace{-6mm}
    \caption{YOLO Semantic Segmentation.}
    \label{fig:semantic_camera_data}
  \end{subfigure}\hfill
  \begin{subfigure}[t]{0.32\textwidth}
    \centering
    \includegraphics[width=\linewidth]{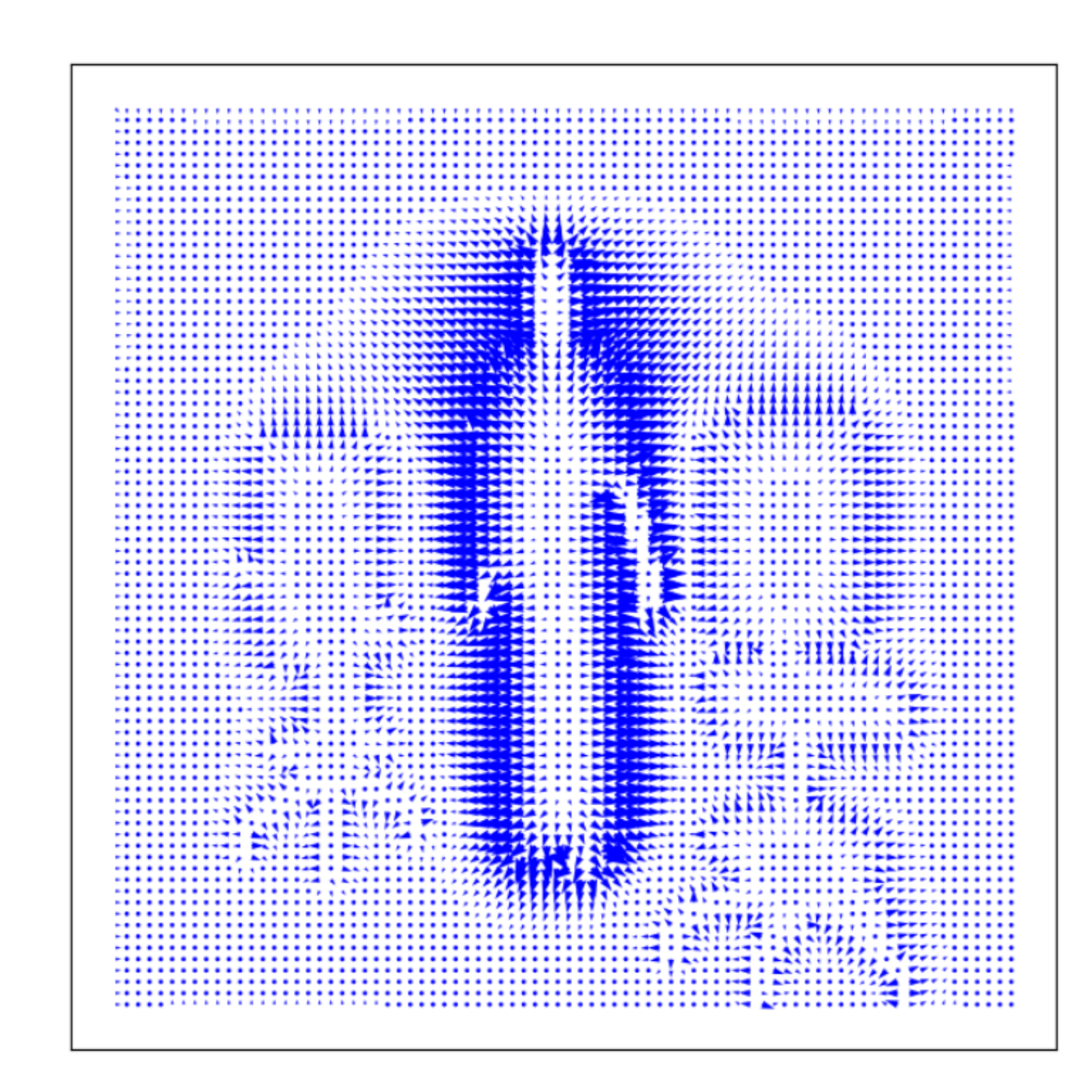}
    \vspace{-6mm}
    \caption{Guidance Field generation.}
    \label{fig:semantic_guidance}
  \end{subfigure}\hfill
  \begin{subfigure}[t]{0.32\textwidth}
    \centering
    \includegraphics[width=\linewidth]{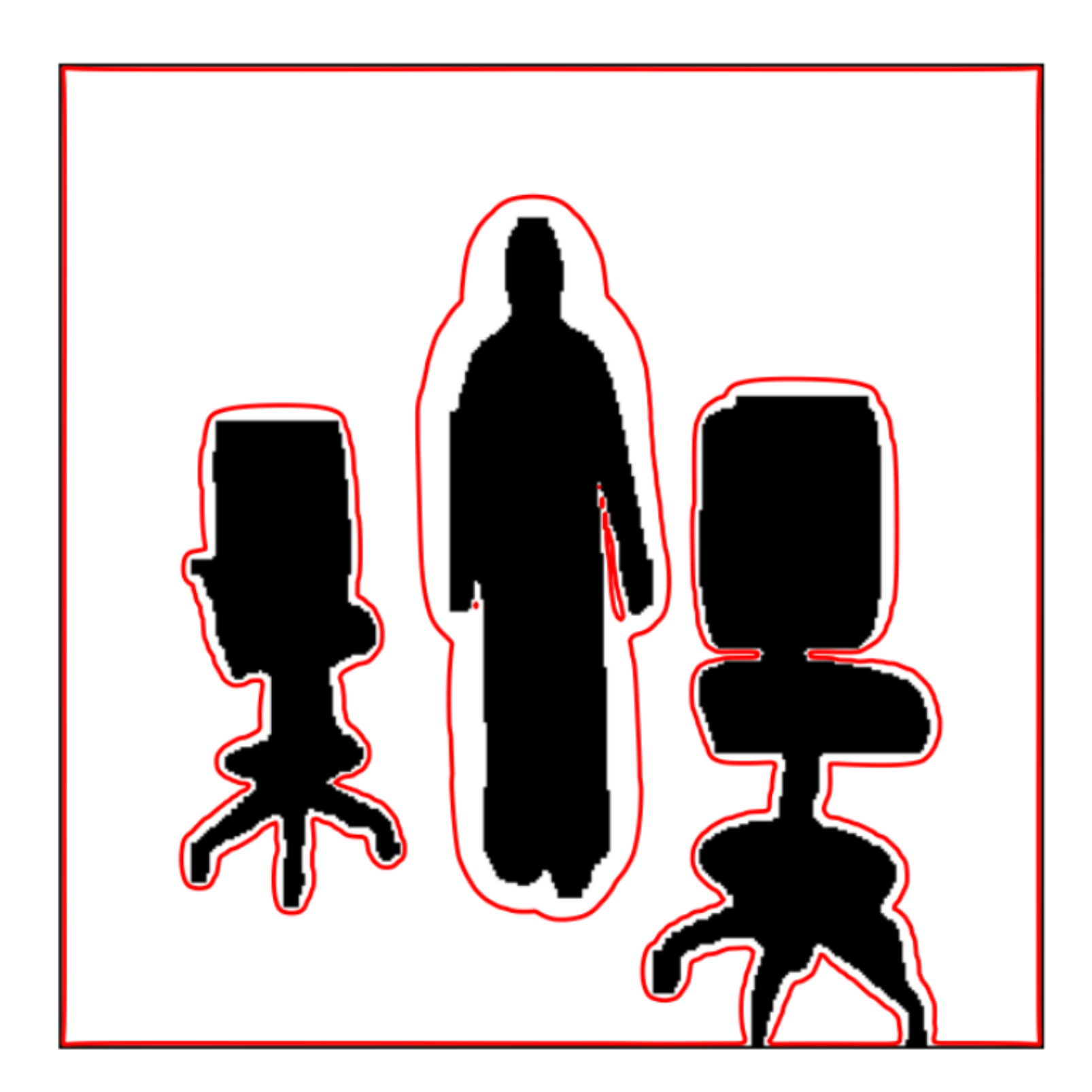}
    \vspace{-6mm}
    \caption{Activation zones.}
    \label{fig:semantic_activation}
  \end{subfigure} 
  \caption{\small{An occupancy map is obtained from semantic segmentation of perception data using a pretrained YOLOv11 model \cite{Jocher_Ultralytics_YOLO_2023} with user designated risk values, assigned according to label classes as by Algorithm \ref{alg:main}: $b_{\text{wall}} = 1,   b_{\text{chair}} = 3$ and $ b_{\text{person}} = 6$. The occupancy map and flux information are then used to generate the guidance field and derive activation zones.}}
  \label{fig:semantic}
  \vspace{-4mm}
\end{figure*}

Next, we extend beyond numerical features and consider the case where we require varying levels of conservatism for each obstacle depending on \textit{what kind} of obstacle it is. Semantic features capture \textit{what} an obstacle is, rather than \textit{where} it is or\textit{ how} it moves. Unlike geometric and dynamic features which can be measured directly, semantic features depend on classification, where obstacles are assigned to discrete categories or labels.

For this example, we consider the discrete set of obstacle labels $\mathcal{L} = \{\textup{wall}, \textup{chair}, \textup{human} \}$, where we obtain the label from a point $\mb{y} \in \partial\Omega$ using $\mathcal{F}$ as $\mb{y} \mapsto \mathcal{F}(\mb{y}) \in \mathcal{L}$. Next, we assign the priority of these semantic labels to be $\mathcal{P}(\textup{wall}) = 1$, $\mathcal{P}(\textup{chair}) = 3$, and $\mathcal{P}(\textup{human}) = 6$. We then assign the risk value using the exponential function $w(r) = 1 - e^{-\alpha r}$ for $\alpha>0$ which maps to $[0,1]$ and applies a high risk to obstacles based on assigned priority. 
Finally, we again define $\Phi$ to be a linear interpolation between the minimum and maximum flux values $b_{\min}$ and $b_{\max}$ associating low risk with low flux and high risk with high flux. 

Figure 5 shows the result of applying Algorithm \ref{alg:main} in this fashion. There, the guidance fields and activation zones around the human are much larger than around the chair and wall obstacles, consistent with the assigned priorities. 

\begin{remark}[Combining features]
Features can also be fused to capture richer context depending on design objectives, provided that the combined mapping preserves the intended priority order before flux assignment.
\end{remark}

\section{Conclusion}
We presented a method for synthesizing risk-aware safety filters using Poisson safety functions and Laplace guidance fields. By adjusting boundary flux, the approach yields tunable boundary gradients around obstacle surfaces, enabling tunable activation zones and conservatism associated with risk. We demonstrated risk-aware safety across diverse contexts; environments with probabilistic occupancies, dynamic
obstacles, and semantic 
labels. Future directions include extensions to 
 general classes of nonlinear systems with closed-loop hardware implementations in real-world settings.


\bibliographystyle{ieeetr}
\bibliography{mainbib, main-GB, cohen}

\end{document}